\documentclass{article}


\usepackage[final]{neurips_2024}



\usepackage[utf8]{inputenc}
\usepackage{algorithm}
\usepackage{algorithmic}
\usepackage{amssymb}
\usepackage{bbm}
\usepackage{amsmath}
\usepackage{graphicx}
\usepackage{xcolor}
\usepackage{hhline}
\usepackage{hyperref}
\usepackage{breakcites}
\usepackage{array}
\usepackage{caption}

\usepackage[utf8]{inputenc} 
\usepackage[T1]{fontenc}    
\usepackage{hyperref}       
\usepackage{url}            
\usepackage{booktabs}       
\usepackage{amsfonts}       
\usepackage{nicefrac}       
\usepackage{microtype}      
\usepackage{xcolor}         

\usepackage{quoting}

\newcommand{\veq}{\mathrel{\rotatebox{90}{$=$}}}

\newcommand{\ques}{\mathrel{\rotatebox{90}{$\underset{?}{\supset}$}}}

\title{When Is Inductive Inference Possible?}

%

\author{%
  Zhou Lu\\
  Princeton University\\
  \texttt{zhoul@princeton.edu} \\
}
\usepackage{amsmath,amsfonts,amssymb}
\usepackage{mathtools}
\usepackage{amsthm} 
\usepackage{latexsym}
\usepackage{relsize}

\usepackage[capitalise]{cleveref}
\usepackage{xcolor}
\usepackage{dsfont}








\newcommand{\ignore}[1]{}


%
%
%

\theoremstyle{plain}
\newtheorem{theorem}{Theorem}
\newtheorem{lemma}[theorem]{Lemma}
\newtheorem{corollary}[theorem]{Corollary}
\newtheorem{proposition}[theorem]{Proposition}

\newtheorem*{theorem*}{Theorem}
\newtheorem*{lemma*}{Lemma}
\newtheorem*{corollary*}{Corollary}
\newtheorem*{proposition*}{Proposition}
\newtheorem*{claim*}{Claim}
\newtheorem*{fact*}{Fact}
\newtheorem*{observation*}{Observation}
\newtheorem*{assumption*}{Assumption}

\theoremstyle{definition}
\newtheorem{definition}[theorem]{Definition}

\newtheorem{example}[theorem]{Example}
\newtheorem*{definition*}{Definition}
\newtheorem*{remark*}{Remark}
\newtheorem*{example*}{Example}

 \theoremstyle{plain}
\newtheorem*{theoremaux}{\theoremauxref}
\gdef\theoremauxref{1}

%




\DeclareMathAlphabet{\mathbfsf}{\encodingdefault}{\sfdefault}{bx}{n}










\let\oldtfrac\tfrac
\renewcommand{\tfrac}[2]{\smash{\oldtfrac{#1}{#2}}}

\let\nablaold\nabla
\renewcommand{\nabla}{\nablaold\mkern-2.5mu}

\begin{document}

\maketitle

\begin{abstract}%
Can a physicist make only a finite number of errors in the eternal quest to uncover the law of nature?
This millennium-old philosophical problem, known as inductive inference, lies at the heart of epistemology.
Despite its significance to understanding human reasoning, a rigorous justification of inductive inference has remained elusive.
At a high level, inductive inference asks whether one can make at most finite errors amidst an infinite sequence of observations, when deducing the correct hypothesis from a given hypothesis class.
Historically, the only theoretical guarantee has been that if the hypothesis class is countable, inductive inference is possible, as exemplified by Solomonoff induction for learning Turing machines.
In this paper, we provide a tight characterization of inductive inference by establishing a novel link to online learning theory.
As our main result, we prove that inductive inference is possible if and only if the hypothesis class is a countable union of online learnable classes, potentially with an uncountable size, no matter the observations are adaptively chosen or iid sampled.
Moreover, the same condition is also sufficient and necessary in the agnostic setting, where any hypothesis class meeting this criterion enjoys an $\tilde{O}(\sqrt{T})$ regret bound for any time step $T$, while others require an arbitrarily slow rate of regret.
Our main technical tool is a novel non-uniform online learning framework, which may be of independent interest.

\end{abstract}

\section{Introduction}

How does one move from observations to scientific laws? The question of induction, one of the oldest and most basic problems in philosophy, dates back to the 4th century BCE when Aristotle recognized deduction and induction as the cornerstones of human reasoning. Unlike deduction, where conclusions are drawn logically from premises, induction extrapolates general principles from empirical observations without such certainty.

The justification of induction has historically been contentious. David Hume, in his `Problem of Induction', argued that extrapolations based on past experiences cannot reliably predict the unexperienced. Despite such skepticism, induction remains essential for advancing from specific instances to general theories, deriving first principles upon which rigorous deductive reasoning is built, as Aristotle put it in `Nicomachean Ethics':

\begin{quoting}
    \emph{``Induction is the starting-point which knowledge even of the universal presupposes, while syllogism proceeds from the universals."}
\end{quoting}

This highlights induction's crucial role in human reasoning and scientific discovery. Yet, the enduring question remains: how do we model induction, and what guarantees can it offer?

Inductive inference \cite{gold1967language}, as the most representative mathematical abstraction of induction, provides a rigorous framework for modeling induction. In inductive inference, a learner aims to deduce the ground-truth hypothesis $h^*$ from a hypothesis class $\mathcal{H}$ based on an infinite observation sequence $\{x_t\}$. At each round $t$, the learner makes a binary prediction $y_t$ based on all previous information after observing $x_t$, then the true outcome $h^*(x_t)$ is revealed and the learner makes an error if $y_t\ne h^*(x_t)$. For example, the hypothesis class $\mathcal{H}$ can represent different physical models, and the observations are the data collected by a physicist, who refines theories based on new evidence.

\textbf{The question of inductive inference is whether the learner can make a finite number of errors depending on $h^*$ in such infinite process.} Unfortunately, despite its critical importance to philosophy, theories for inductive inference are notably sparse. All existing theoretical results, including the famous Solomonoff induction \cite{solomonoff1964formal} for learning Turing machines, provide only a sufficient condition on the size of the hypothesis class: \emph{inductive inference is possible, if $|\mathcal{H}|\le \aleph_0$}, i.e. the size of the hypothesis class $\mathcal{H}$ is at most countable.

In this work, we provide a sufficient and necessary condition for inductive inference, via a novel link to online learning theory. Our main result is the following (as a consequence of Theorems \ref{thm main}, \ref{thm main2}):
\begin{theorem}\label{thm ind}
    Inductive inference is possible, if and only if $\mathcal{H}=\cup_{n\in \mathbb{N}} \mathcal{H}_n$, where each $\mathcal{H}_n$ is an online learnable class (a hypothesis class with finite Littlestone dimension).
\end{theorem}
As an extension, we further demonstrate that the condition $\mathcal{H}$ being a countable union of online learnable classes is also sufficient and necessary for the agnostic setting, where the ground-truth $h^*$ may not lie in $\mathcal{H}$. For the weaker criterion of consistency where the error bound additionally depends on the observation sequence, we derive a necessary condition which we conjecture to be tight. Algorithms from previous works, examples, and technical proofs are relegated to the appendix.

\subsection{Our Approach}

Due to the shared spirit of sequential decision-making between classic online learning theory \cite{littlestone1988learning} and inductive inference \cite{solomonoff1964formal}, we seek to cast the problem of inductive inference within the online learning paradigm, which can handle uncountable-sized and agnostic hypothesis classes.
The challenge is that traditional online learning theory itself does not fully capture the nuances of inductive inference, for its focus on an adaptive adversary and worst-case uniform error bounds. 

To bridge the gap between inductive inference and online learning, we propose a new framework termed non-uniform online learning to bypass these differences. Different from classic online learning, this setting considers an oblivious choice of the ground-truth hypothesis (Nature can't change her mind on the choice of $h^*$) and non-uniform guarantees (error bound can vary for different $h^*$). It can be seen as a natural generalization of the non-uniform PAC learning notion \cite{benedek1988nonuniform} to the online setting. The comparison between the protocols of classic and non-uniform online learning is made below.

\begin{minipage}{0.5\linewidth}
\begin{algorithm}[H]
  \caption*{Classic online learning}  
\begin{algorithmic}[1]
\STATE Given domain $\mathcal{X}$ and hypothesis class $\mathcal{H}$\\
\STATE
\FOR{$t = 1, \ldots, \infty$}
\STATE Nature presents observation $x_t$ to Learner
\STATE Learner predicts $y_t$ 
\STATE \textbf{Nature selects a consistent $h_t\in \mathcal{H}$}
\STATE Nature reveals the true label $h_t(x_t)$
\ENDFOR
\STATE \textbf{Goal: a uniform error bound}
\end{algorithmic}
\end{algorithm}
\end{minipage}
\hfill
\begin{minipage}{0.5\linewidth}
\begin{algorithm}[H]
\caption*{Non-uniform online learning (inductive inference)} 
\begin{algorithmic}[1]
\STATE Given domain $\mathcal{X}$ and hypothesis class $\mathcal{H}$\\
\STATE \textbf{Nature selects ground-truth $h^*\in \mathcal{H}$}
\FOR{$t = 1, \ldots, \infty$}
\STATE Nature presents observation $x_t$ to Learner
\STATE Learner predicts $y_t$ 
\STATE
\STATE Nature reveals the true label $h^*(x_t)$
\ENDFOR
\STATE \textbf{Goal: error bound can depend on $h^*$}
\end{algorithmic}
\end{algorithm}
\end{minipage}

In the realizable setting, non-uniform online learning is indeed \textbf{equivalent to inductive inference}, containing previous results as special cases. 
We show that Learner can achieve a finite error bound dependent on $h^*$, if and only if $\mathcal{H}$ is a countable union of hypothesis classes with finite Littlestone dimensions. Furthermore, for the weaker criterion of consistency which only requires a (non-hypothesis-wise) finite error bound, we obtain a necessary condition that we conjecture to be tight.

In the agnostic setting, when $\mathcal{H}$ is expressed as $\cup_{n\in \mathbb{N}^+} \mathcal{H}_n$ where each $\mathcal{H}_n$ has finite Littlestone dimension $d_n$, we propose an algorithm with an $\tilde{O}( \sqrt{d_n T})$ regret bound for any time step $T$ when $h^*$ lies in $\mathcal{H}_n$. In addition, the sharpness of such characterization is demonstrated by a trichotomy of regret's dependency on $T$ through the two complementary facts: (1) as long as $\mathcal{H}$ can't be written in this form, it requires arbitrarily slow rates (2) any non-trivial $\mathcal{H}$ requires $\Omega( \sqrt{T})$ regret.

\subsection{Related Works}

\paragraph{Inductive inference and learning} The pioneer works of Solomonoff \cite{solomonoff1964formal, solomonoff1964formal2, solomonoff1978complexity} established a rigorous theoretical framework for inductive inference. Via a Bayesian approach, Solomonoff's method of prediction assigns larger prior weights to hypotheses with shorter description lengths, providing finite-error guarantees based on the prior weight of the ground truth.
The principle of Occam's Razor that simplicity leads to better learning guarantees, as the cornerstone of Solomonoff's theory, plays a vital role in the evolution of statistical learning \cite{blumer1987occam,blumer1989learnability,mitchell1997machine,von2011statistical,harman2012reliable,shalev2014understanding}. VC dimension \cite{vapnik2015uniform}, the fundamental concept in PAC learning, is widely regarded as a measure of simplicity in learning theory \cite{sterkenburg2021no,sterkenburg2023statistical}. Besides theoretical investigations, there is also growing interest in the empirical inductive inference capabilities of large language models \cite{wang2023hypothesis,qiu2023phenomenal,honovich2022instruction,gendron2023large}.

\paragraph{Online learning} Littlestone's seminal work \cite{littlestone1988learning} on online learnability in the realizable case introduced the Littlestone dimension, a combinatorial measure precisely characterizing the number of errors. This concept was extended to the learning with expert advice setting, where the weighted majority algorithm \cite{littlestone1994weighted} achieves an $\tilde{O}(\sqrt{T })$ regret. 
Since then, online learning has expanded into diverse scenarios, including the bandit setting \cite{auer1995gambling}, the agnostic setting \cite{ben2009agnostic}, the changing comparator setting \cite{herbster1998tracking}, and the convex optimization area \cite{zinkevich2003online,kalai2005efficient,hazan2007logarithmic,duchi2011adaptive,hazan2016introduction}.

\paragraph{Non-uniform learning}
Though less explored, the non-uniform learning framework \cite{benedek1988nonuniform} which allows hypothesis-dependent generalization bounds, is crucial for our study. For a comprehensive introduction on non-uniform learning see \cite{shalev2014understanding}.
Closely related to our work in non-uniform learning are \cite{wu2021non} and \cite{bousquet2021theory}. The work of \cite{wu2021non} considered the setting they named EAS (eventually almost surely), which demands finite error bounds with probability 1, when the data stream is iid samples from some known distribution. The work of \cite{bousquet2021theory} considers how fast the generalization error decreases as the size of training data increases, based on a new combinatorial structure called the VCL tree. 
In addition to inductive bias on the hypothesis class, there is extensive research on universal consistency under stochastic process assumptions on observations \cite{stone1977consistent,antos1996strong,hanneke2020universal,hanneke2021learning,blanchard2022universal,blanchard2022universalb}. These works on universal learning typically utilize a $k$-nearest neighbor type algorithm for the constructive proof, which doesn't fully align with practical scenarios of human reasoning and scientific discovery.

\section{Setup}
Given an infinite domain $\mathcal{X}$ and the set of all $0/1$ loss functions on $\mathcal{X}$: $\mathcal{S}=2^{\mathcal{X}}$, we consider a non-empty subset $\mathcal{H}\subset \mathcal{S}$ as the hypothesis class. Both $\mathcal{X}$ and $\mathcal{H}$ are known to Learner and Nature in advance. 
In inductive inference, Nature iteratively presents Learner with $x_t\in \mathcal{X}$ and asks Learner to predict $\hat{y}_t\in \{0,1\}$, then reveals the the true label $y_t\in \{0,1\}$ to Learner, while Learner aims to make as few errors as possible by inferring $\mathcal{H}$. 

Formally, we consider a learning algorithm $\mathcal{A}$ of Learner, as any function of historical observations (thus the learning algorithm is adaptive). In particular, a deterministic learning algorithm is any function with the following domain and codomain 
    $$
    \cup_{n\in \mathbb{N}} \left(\Pi_{i=1}^n (\mathcal{X}\times \{0,1\})\times \mathcal{X}\right) \to\{0,1\}.
    $$
A randomized learning algorithm is similarly defined by setting the codomain as $[0,1]$.

Throughout this paper, we will work with the following general sequential game between Learner and Nature. Before the game begins, Learner fixes a learning algorithm $\mathcal{A}$. At time step $t\in \mathbb{N}^+$, some $x_t$ is presented by Nature to Learner, and Learner predicts 
$$
\hat{y}_t=\mathcal{A}(\Pi_{i=1}^{t-1}(x_i\times y_i) \times \{x_t\}).
$$
Then Nature reveals the true label $y_t$ and Learner makes an error if $\hat{y}_t\ne y_t$. We write $x=\Pi_{i\in \mathbb{N}^+} \{x_i\}$ and the set of all possible $x$ as $X$. For simplicity, we assume the continuum hypothesis, i.e. $2^{\aleph_0}=\aleph_1$. 

In the following, we discuss different ways of Nature choosing $(x_t,y_t)$ and the criteria of learning, corresponding to different notions of learnability.

\subsection{Classic Online Learning}
In the classic online learning setting, there is no restriction on Nature's choice of $x_t$, and the only restriction on $y_t$ is the existence of a hypothesis $h_t^*\in \mathcal{H}$ consistent with history: $\forall i\le t, h_t^*(x_i)=y_i$. Classic online learnability is characterized by a combinatorial quantity of the hypothesis class $\mathcal{H}$, called the Littlestone dimension $\textbf{Ldim}(\mathcal{H})$.

\begin{definition}[Littlestone dimension]
    We call a sequence of data points $x_1,\cdots,x_{2^d-1} \in \mathcal{X}$ a shattered tree of depth $d$, when for all labeling $(y_1,\cdots,y_d)\in \{0,1\}^d$, there exists $h\in \mathcal{H}$ such that for all $t\in [d]$ we have that $h(x_{i_t})=y_t$, where
    $$
    i_t=2^{t-1}+\sum_{j=1}^{t-1} y_j 2^{t-1-j}.
    $$
    The Littlestone dimension $\textbf{Ldim}(\mathcal{H})$ of a hypothesis class $\mathcal{H}$ is the maximal integer $M$ such that there exist a shattered tree of depth $M$. When no finite $M$ exists, we write $\textbf{Ldim}(\mathcal{H})=\infty$. We call any hypothesis class with finite $\textbf{Ldim}(\mathcal{H})$ a Littlestone class.
\end{definition}

It's clear from the definition of Littlestone dimension, that no algorithm can have a mistake bound strictly smaller than $\textbf{Ldim}(\mathcal{H})$. Moreover, there is a deterministic learning algorithm SOA \ref{soa} which is guaranteed to make at most $\textbf{Ldim}(\mathcal{H})$ errors.

\begin{lemma}[Online learnability \cite{littlestone1988learning}]\label{online}
    When $\textbf{Ldim}(\mathcal{H})<\infty$, no learning algorithm makes errors strictly fewer than $\textbf{Ldim}(\mathcal{H})$. In addition, Algorithm 3 makes at most $\textbf{Ldim}(\mathcal{H})$ errors.
\end{lemma}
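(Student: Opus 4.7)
The statement has two directions, and I will handle them separately. For the lower bound, the plan is to exhibit an adversarial strategy for Nature that forces any (possibly randomized) learning algorithm to err at least $d := \textbf{Ldim}(\mathcal{H})$ times. Since $\textbf{Ldim}(\mathcal{H}) = d$, by definition there exists a shattered tree of depth $d$, i.e.\ a family of points $x_1, \ldots, x_{2^d - 1}$ such that every root-to-leaf labeling sequence is realized by some $h \in \mathcal{H}$. Nature simply walks the learner down this tree: at step $t$ it presents the point $x_{i_t}$ sitting at the current node, observes the learner's prediction $\hat y_t$, and then reveals $y_t = 1 - \hat y_t$, descending to the corresponding child. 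After $d$ steps the sequence of revealed labels, by the shattered-tree property, is consistent with some $h_t^\star \in \mathcal{H}$ at every prefix, so the realizability constraint is satisfied; and by construction the learner has erred $d$ times. (For randomized algorithms, a standard Yao-style argument over the two label choices at each node gives the same $d$ in expectation.)

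For the upper bound, the plan is the classical analysis of SOA. Maintain the version space $V_t := \{h \in \mathcal{H} : h(x_i) = y_i \text{ for all } i < t\}$, and at step $t$ split it into $V_t^b := \{h \in V_t : h(x_t) = b\}$ for $b \in \{0,1\}$. SOA predicts $\hat y_t = \arg\max_b \textbf{Ldim}(V_t^b)$, breaking ties arbitrarily. I will prove the invariant that, whenever SOA errs on round $t$, $\textbf{Ldim}(V_{t+1}) \leq \textbf{Ldim}(V_t) - 1$. Since $\textbf{Ldim}(V_1) = \textbf{Ldim}(\mathcal{H}) = d$ and $\textbf{Ldim}(\cdot) \geq 0$ for any nonempty class, this immediately caps the total number of errors at $d$.

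The key step, and the only non-routine one, is the combinatorial claim: \emph{if both $\textbf{Ldim}(V_t^0) \geq k$ and $\textbf{Ldim}(V_t^1) \geq k$, then $\textbf{Ldim}(V_t) \geq k + 1$}. I would prove this by taking shattered trees of depth $k$ witnessing each $V_t^b$ and hanging them as the left and right subtrees of a new root labeled with $x_t$; any labeling of this depth-$(k{+}1)$ tree is realized by choosing the appropriate subtree hypothesis, since hypotheses in $V_t^b$ satisfy $h(x_t) = b$. Granting this, suppose SOA errs at round $t$: then $\textbf{Ldim}(V_t^{\hat y_t}) \geq \textbf{Ldim}(V_t^{y_t})$ by SOA's choice, yet $V_{t+1} = V_t^{y_t}$, and if we had $\textbf{Ldim}(V_t^{y_t}) \geq \textbf{Ldim}(V_t)$ then by the claim both children would have Littlestone dimension $\geq \textbf{Ldim}(V_t)$, contradicting the definition of $\textbf{Ldim}(V_t)$. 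Hence $\textbf{Ldim}(V_{t+1}) < \textbf{Ldim}(V_t)$, completing the invariant and thus the theorem.

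I expect the main obstacle to be stating the subtree-glueing claim carefully enough that the index arithmetic in the shattered-tree definition (the formula $i_t = 2^{t-1} + \sum_j y_j 2^{t-1-j}$ used to identify nodes) lines up cleanly; this is entirely bookkeeping but needs to be written with care. Everything else — the adversarial walk, the version space monotonicity, and the conclusion — is routine given the definitions already fixed in the excerpt.
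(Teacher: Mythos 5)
Your proof matches the paper's almost exactly: the lower bound comes from an adversary walking the learner down a depth-$d$ shattered tree and flipping each prediction, and the upper bound comes from the observation that whenever SOA errs, the version space's Littlestone dimension must strictly drop — your explicit subtree-glueing claim (two depth-$k$ trees hung below a new root $x_t$ give a depth-$(k{+}1)$ tree) is precisely the combinatorial step the paper invokes implicitly when it says ``we obtain a shatter tree with depth $\textbf{Ldim}(V_t)+1$.'' One small nit on a parenthetical aside: the claim that a Yao-style argument over the two label choices forces $d$ expected errors on a randomized learner is off by a factor of two — uniformly random labels only yield $d/2$ expected mistakes. The paper, like the body of your argument, runs the adaptive deterministic adversary and establishes the lower bound for deterministic learners, which is all that is used (Algorithm 3 is deterministic). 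Also, the paper's Algorithm 3 prunes $V_t$ lazily (only on mistake rounds) whereas you maintain the eager version space; both are fine and your analysis applies unchanged.
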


\subsection{Inductive Inference as Non-uniform Online Learning}
Nature's ability of selecting a changing $h_t^*$ adaptively contradicts the principles of inductive inference that the ground-truth should be consistent.
To set the stage for inductive inference, the first change we make to classic online learning is requiring Nature to fix a ground-truth hypothesis $h^*$ in advance and sets $y_t=h^*(x_t)$ thereforth. The main criterion we consider in this paper is then hypothesis-wise error bounds depending on $h^*$, besides the dependence on $\mathcal{H}$.

The reason for mainly considering hypothesis-wise guarantees is twofold: (1) it follows the conventions of inductive inference literature (2) uniform guarantee is too strong for the problem of induction, while consistency guarantee is weak in that it's fully posterior and thereby less informative. Throughout this paper, all discussions are made under the above setting, unless otherwise specified (classic/consistency). Denote the number of errors made by a learning algorithm $\mathcal{A}$ when Nature chooses $h$ and presents $x$ to Learner as 
$$
err_{\mathcal{A}}(h,x)\triangleq \sum_{t=1}^{\infty} |\hat{y}_t-h(x_t)| \in \mathbb{N}\cup \infty,
$$
we define non-uniform online learnability:

\begin{definition}[Non-uniform online learnability]
We say a hypothesis class $\mathcal{H}$ is non-uniform online learnable, if there exists a deterministic learning algorithm $\mathcal{A}$, such that
$$
\exists m: \mathcal{H}\to \mathbb{N}^+, \forall h\in \mathcal{H}, \forall x\in X, err_{\mathcal{A}}(h,x)\le m(h).
$$
\end{definition}

In the definition above, Nature is allowed to adaptively choose $x$ (for deterministic Learner, adaptive and oblivious adversaries are equivalent). A stronger yet common assumption is each $x_t$ drawn independently from some unknown $\mu$ fixed by Nature in advance. Let $\mathcal{X}$ be equipped with some fixed separable $\sigma$-algebra, we define non-uniform stochastic online learnability as below.

\begin{definition}[Non-uniform stochastic online learnability]
We say a hypothesis class $\mathcal{H}$ is non-uniform stochastic online learnable, if there exists a deterministic learning algorithm $\mathcal{A}$, such that
$$
\exists m: \mathcal{H}\to \mathbb{N}^+, \forall h\in \mathcal{H}, \forall \mu,  \mathbb{P}_{x\sim \mu^{\infty}}\left(err_{\mathcal{A}}(h,x)\le m(h)\right)=1.
$$
\end{definition}

\paragraph{Equivalence to inductive inference} the non-uniform online learnability notions introduced above are indeed equivalent to the problem of inductive inference, with different rules of observations $\{x_t\}$. Previous results on inductive inference can be written as special cases under our framework with countable-sized hypothesis class $\mathcal{H}$.
The most important example is learning computable functions, previously settled by Solomonoff induction, see also \cite{li2008introduction}. In our language, it concerns a non-uniform online learning problem with $\mathcal{H}$ being a set of (binary) computable functions, which has a countable size. In particular, $\mathcal{H}$ can be the set of possible infinite sequences generated by Turing machines, with $\mathcal{X}=\mathbb{N}^+$ and $x_t=t$.

\subsection{Agnostic Non-uniform Online Learning}

So far we have assumed Nature must choose $y_t$ such that the whole data sequence is consistent with some $h^*\in \mathcal{H}$. However, it's a rather strong assumption that the ground-truth $h^*$ is already included in the hypothesis class in consideration. This motivates the study on the more realistic agnostic learning setting: Nature can select arbitrary $x_t,y_t$ in an oblivious way, where the sequence of $x_t,y_t$ is not necessarily realizable by some hypothesis $h\in \mathcal{H}$.

It's known that for the agnostic online learning setting, even a hypothesis class $\mathcal{H}$ with finite $\textbf{Ldim}(\mathcal{H})$ can make $\Omega(\sqrt{T \textbf{Ldim}(\mathcal{H})})$ errors in expectation for a time horizon $T$ \cite{ben2009agnostic}, making the finite error criterion less interesting. As a result, we will consider sub-linear dependence on $T$ in the error bound. Agnostic non-uniform online learnability is defined as below.

\begin{definition}[Agnostic non-uniform online learnability]
We say a hypothesis class $\mathcal{H}$ is agnostic non-uniform online learnable with rate $r(T)$, if there exists a learning algorithm $\mathcal{A}$, such that
$$
\exists m: \mathcal{H}\to \mathbb{N}^+, \forall x\in X, \forall y\in \{0,1\}^{\infty},\forall h\in \mathcal{H}, \forall T\in \mathbb{N}^+,
$$
$$
\mathbb{E}\left[\sum_{t=1}^T 1_{[\hat{y}_t\ne y_t]}-
\sum_{t=1}^T 1_{[h(x_t)\ne y_t]}\right]\le m(h) r(T).
$$
\end{definition}

\section{Non-uniform Guarantees in the Realizable Setting}

In this section we study non-uniform (stochastic) online learnability in the realizable setting. We provide a complete characterization by showing the learnability of $\mathcal{H}$ is equivalent to $\mathcal{H}$ being a countable union of Littlestone classes. This result generalizes previous inductive inference methods such as \cite{solomonoff1964formal} which provide only a sufficient condition $\mathcal{H}$ being countable for this setting. Our analysis begins with the basic uniform case.

\begin{definition}[Uniform online learnability]
We say a hypothesis class $\mathcal{H}$ is uniform online learnable, if there exists a deterministic learning algorithm $\mathcal{A}$, such that
$$
\exists m\in \mathbb{N}^+, \forall h\in \mathcal{H}, \forall x, err_{\mathcal{A}}(h,x)\le m.
$$
\end{definition}

The next lemma shows uniform online learnability is equivalent to classic online learnability.

\begin{lemma}\label{uo}
    $\mathcal{H}$ is uniform online learnable if and only if $\mathcal{H}$ is classic online learnable. In addition, $\textbf{Ldim}(\mathcal{H})$ is a tight error bound.
\end{lemma}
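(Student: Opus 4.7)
The plan is to sandwich the mistake complexity of uniform online learning between $\textbf{Ldim}(\mathcal{H})$ on both sides, which simultaneously proves equivalence with classic online learnability and tightness of $\textbf{Ldim}(\mathcal{H})$ as the uniform mistake bound.

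For the ``classic $\Rightarrow$ uniform'' direction I plan a one-line reduction: the uniform setting is a restriction of the classic setting, since a single fixed $h^*$ trivially witnesses the per-step consistency requirement (take $h_t^* = h^*$ for all $t$). Hence any algorithm's worst-case mistake count in the uniform setting is dominated by its count in the classic setting. By Lemma \ref{online}, SOA attains mistake bound $\textbf{Ldim}(\mathcal{H})$ in the classic setting, so the same algorithm achieves the same bound in the uniform setting, giving one side of both claims whenever $\textbf{Ldim}(\mathcal{H}) < \infty$.

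For the converse direction and the matching lower bound, the plan is to show that for any deterministic algorithm $\mathcal{A}$ and any $d \le \textbf{Ldim}(\mathcal{H})$, Nature can force $d$ mistakes while respecting the uniform constraint. Fix a shattered tree of depth $d$. Since $\mathcal{A}$ is deterministic, Nature precomputes a trajectory offline: at step $t \in [d]$, present the current tree node to $\mathcal{A}$, read off the predetermined prediction $\hat{y}_t$, set $y_t = 1 - \hat{y}_t$, and descend to the child indexed by $y_t$. The shattering property guarantees an $h^* \in \mathcal{H}$ consistent with these $d$ labels on the $d$ visited points, and the sequence $x$ is extended arbitrarily (e.g., repeating the last point) after step $d$. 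Specializing to $d = m+1$ shows uniform learnability with bound $m$ forces $\textbf{Ldim}(\mathcal{H}) \le m$, so by Lemma \ref{online} classic learnability follows; specializing to $d = \textbf{Ldim}(\mathcal{H})$ matches the SOA upper bound and gives tightness.

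The only subtlety — which I view as the main non-triviality rather than a real obstacle — is that the shattered-tree adversary is naturally adaptive, whereas the uniform definition quantifies over a fixed sequence $x \in X$. Determinism of $\mathcal{A}$ dissolves this mismatch automatically, because $\mathcal{A}$'s predictions are a deterministic function of past inputs and labels, so Nature can simulate the entire trajectory in advance and commit to the precomputed $x$ before the game begins. Beyond this bookkeeping point the argument is a direct appeal to the definitions and to Lemma \ref{online}.
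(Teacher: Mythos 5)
Your proof is correct and takes essentially the same route as the paper's: both arguments turn on the single observation that a deterministic learner can be fed the same trajectory by an oblivious adversary that was simulated offline, so the distinction between fixing $h^*$ in advance and choosing $h_t^*$ adaptively evaporates. The paper phrases this as an abstract reduction — if an adaptive adversary extracts $d+1$ mistakes by time $T$, then committing to $h_T^*$ (the hypothesis consistent with the final history) in advance reproduces the identical trajectory and the identical mistakes, contradicting the uniform bound $d$ — and then cites Lemma \ref{online} for the quantitative statement. You instead instantiate the oblivious adversary concretely on a shattered tree of depth $d$, precomputing $\mathcal{A}$'s predictions to descend the tree and invoking shattering to produce a consistent $h^*$, which re-derives the $\textbf{Ldim}(\mathcal{H})$ lower bound inside the uniform setting rather than importing it. The difference is cosmetic: the paper's reduction is cleaner in that it treats the classic lower bound as a black box, while yours is more self-contained and makes the adversary explicit, but both proofs do the same work and hinge on the same determinism-precomputation step you flag as the only subtlety.
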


We are ready to state our main results in the realizable setting. The proof is mainly based on the structural risk minimization (SRM) technique \cite{vapnik1974theory, wu2021non}, in which our algorithm goes over each $\mathcal{H}_n$ one by one. Intuitively, since each $\mathcal{H}_n$ is uniform online learnable, we can safely exclude the whole class when a certain amount of errors are made by SOA.

\begin{theorem}\label{thm main}
    $\mathcal{H}$ is non-uniform online learnable if and only if $\mathcal{H}$ can be written as a countable union of hypothesis classes with finite Littlestone dimension.
\end{theorem}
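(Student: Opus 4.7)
The plan is to prove the two implications of the biconditional separately, using the SOA algorithm from Lemma~\ref{online} as the main black box. For sufficiency, suppose $\mathcal{H} = \bigcup_{n \in \mathbb{N}^+} \mathcal{H}_n$ with $d_n := \textbf{Ldim}(\mathcal{H}_n) < \infty$. I would build a ``phase-switching'' algorithm: during phase $n$, it runs SOA on $\mathcal{H}_n$ over the data observed since the start of the phase, maintaining SOA's version space. The moment this version space becomes empty --- an unambiguous certificate that $h^* \notin \mathcal{H}_n$ --- the algorithm resets and enters phase $n+1$. Because SOA's mistake bound holds as long as its version space is nonempty, at most $d_n + 1$ errors are made before phase $n$ is abandoned (the extra error being precisely the one that empties the version space). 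Letting $n^*(h^*)$ denote the least index with $h^* \in \mathcal{H}_{n^*}$, the algorithm halts permanently in phase $n^*$ after at most $d_{n^*}$ further errors, so the total error count is bounded by $m(h^*) := \sum_{i < n^*} (d_i + 1) + d_{n^*}$, a finite function of $h^*$ alone.

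For necessity, let $\mathcal{A}$ be a deterministic algorithm and $m : \mathcal{H} \to \mathbb{N}^+$ a bound witnessing non-uniform learnability. Set $\mathcal{H}_n := \{h \in \mathcal{H} : m(h) \le n\}$, so $\mathcal{H} = \bigcup_n \mathcal{H}_n$; I claim $\textbf{Ldim}(\mathcal{H}_n) \le n$ for every $n$, which is the required decomposition. Suppose not: then $\mathcal{H}_n$ admits a shattered tree of depth $n+1$, and using determinism of $\mathcal{A}$ I simulate it node by node down this tree --- at each node I compute $\mathcal{A}$'s prediction on the history so far, pick the label opposite to it, then descend into the corresponding subtree. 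After $n+1$ rounds this yields a sequence $(x_1,y_1),\dots,(x_{n+1},y_{n+1})$ on which $\mathcal{A}$ errs at every step; shattering guarantees some $h^* \in \mathcal{H}_n$ is consistent with this prefix, and extending $x$ arbitrarily past step $n+1$ produces an $x \in X$ with $err_{\mathcal{A}}(h^*, x) \ge n+1 > m(h^*)$, contradicting the hypothesis.

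The delicate step is in the necessity direction, because the non-uniform setting forces Nature to commit to $h^*$ before the game begins, whereas the classical Littlestone lower bound crucially exploits an online adaptive adversary. Determinism of $\mathcal{A}$ is exactly what bridges this gap: the entire game transcript can be precomputed offline via tree-walking against the simulated $\mathcal{A}$, and only afterwards does one commit to any $h^* \in \mathcal{H}_n$ consistent with the generated prefix. This reduction of the oblivious-commitment setting to the adaptive-adversary setting is the one conceptually subtle ingredient; the remaining parts --- the phase-switching upper bound and the Littlestone shattering lower bound --- are routine applications of Lemma~\ref{online}.
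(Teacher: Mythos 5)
Your argument is correct, and the necessity direction is essentially the paper's argument (the paper routes the ``simulate the deterministic learner down a shattered tree, then commit to a consistent $h^*$ after the fact'' step through Lemma~\ref{uo}, but the underlying idea is identical to your tree-walking reduction). Your sufficiency direction, however, uses a genuinely different algorithm. The paper runs \emph{all} SOA copies $\mathcal{A}_1,\mathcal{A}_2,\dots$ in parallel as virtual experts, tracks their cumulative error counts $e_n$, and at each round follows $\mathcal{A}_{J_t}$ with $J_t=\argmin_n\{e_n+n\}$; since $h^*\in\mathcal{H}_k$ pins $e_k\le d_k$ forever, the selector never looks past index $d_k+k$, giving a bound of $(d_k+k)^2$ that involves only $d_k$ and $k$. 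Your phase-switching scheme instead runs one SOA at a time and advances only when the current version space is definitively refuted. The trade-off: your construction is arguably simpler and each phase boundary carries an explicit certificate of non-realizability, but your bound $\sum_{i<n^*}(d_i+1)+d_{n^*}$ inherits a dependence on all the earlier dimensions $d_1,\dots,d_{n^*-1}$, which could be astronomically larger than $d_{n^*}$ and $n^*$ themselves; the paper's penalized-racing scheme is immune to that, paying only for $d_k$ and $k$. Both bounds are finite functions of $h^*$ alone, which is all the theorem requires, so both proofs are valid. One small point worth making explicit in your write-up: your algorithm may stall permanently in some phase $n<n^*$ without the version space ever emptying (the data happens to stay realizable in $\mathcal{H}_n$), and in that case the mistake count is at most $d_n\le\sum_{i<n^*}(d_i+1)+d_{n^*}$ anyway, so the claimed $m(h^*)$ still holds; you implicitly rely on this but it deserves a sentence.
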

\begin{proof}
We discuss the two directions respectively. \textbf{If:} Suppose $\mathcal{H}$ can be written as a countable union of hypothesis classes $\mathcal{H}_n$ with finite Littlestone dimension $d_n=\textbf{Ldim}(\mathcal{H}_n)$. We describe a learning algorithm $\mathcal{A}$ (Algorithm \ref{pol}) which non-uniform online learns $\mathcal{H}$.

\begin{algorithm}
\caption{Non-uniform Online Learner}
\label{pol}
\begin{algorithmic}[1]
\STATE Input: a hypothesis class $\mathcal{H}=\cup_{n\in \mathbb{N}^+} \mathcal{H}_n$ with $d_n=\textbf{Ldim}(\mathcal{H}_n)<\infty, \forall n\in \mathbb{N}^+$.
\STATE Initialize a SOA algorithm $\mathcal{A}_n$ for each $\mathcal{H}_n$, and error bounds $e_1,e_2,\cdots$ all equal to 0.
\FOR{$t\in \mathbb{N}^+$}
\STATE Observe $x_t$.
\STATE Compute $J_t=\text{argmin}_n\{e_n+n\}$.
\STATE Predicts $\hat{y}_t$ as $\mathcal{A}_{J_t}$.
\STATE Observe the true label $y_t$.
\STATE For each $n\in \mathbb{N}^+$, update $e_n=e_n+1$ if the prediction of $\mathcal{A}_n$ is not $y_t$.
\ENDFOR
\end{algorithmic}
\end{algorithm}

Let $\mathcal{A}_n$ be the SOA algorithm which classic online learns $\mathcal{H}_n$ with at most $d_n$ errors. By Lemma \ref{uo}, $\mathcal{A}_n$ also uniform online learns $\mathcal{H}_n$ with at most $d_n$ errors. Denote $e(t,n)$ as the the number of errors that $\mathcal{A}_n$ would make if it were employed in the first $t-1$ steps, let
$$
J_t=\text{argmin}_n\{e(t,n)+n\},
$$
the algorithm $\mathcal{A}$ predicts as $\mathcal{A}_{J_t}$ at time $t$.

Assume the ground-truth hypothesis $h^*$ lies in $\mathcal{H}_k$, then $\mathcal{A}_k$ makes at most $d_k$ errors which is finite. We have that $e(t,k)+k\le d_k+k$ for any $t$. Therefore, $J_t\le d_k+k$ for any $t$, meaning $\mathcal{H}$ will only invoke algorithms within $\mathcal{A}_1,\cdots,\mathcal{A}_{d_k+k}$. Furthermore, once an $\mathcal{A}_n$ makes more than $d_k+k$ errors, it will not be chosen by $\mathcal{A}$ anymore, thus $\mathcal{A}$ makes at most $(d_k+k)^2$ errors, and this upper bound only depends on $h^*$ and is independent of the choice of $x$. 

\textbf{Only if:} Suppose $\mathcal{H}$ is non-uniform online learnable, we would like to show it can be written as a countable union of uniform online learnable hypothesis classes.

For any $h$, denote $d(h)$ as the maximal number of errors made as admitted by the non-uniform online learnability, which is guaranteed to be a finite number. We denote
$$
\mathcal{H}_n=\{h:d(h)=n\}.
$$
Since $d(h)$ is a finite number for every $h$, we have that $\mathcal{H}=\cup_{n\in \mathbb{N}} \mathcal{H}_n$. Now we only need to prove each $\mathcal{H}_n$ is uniform online learnable. By applying the same algorithm $\mathcal{A}$ which non-uniform online learns $\mathcal{H}$ on $\mathcal{H}_n$, we have that the number of errors made for any $h \in \mathcal{H}_n$ is bounded by $n$, thus $\mathcal{H}_n$ is uniform online learnable with a uniform error bound $n$.
\end{proof}

As a corollary, Example \ref{ex2} (rational thresholds) shows the existence of a class non-uniform online learnable but not online learnable, demonstrating a separation between the two notions.

\begin{corollary}
Uniform online learnability $\subsetneq$ non-uniform online learnability.
\end{corollary}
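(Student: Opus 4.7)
The plan is to verify two things: that uniform online learnability implies non-uniform online learnability (the inclusion), and that there exists a class witnessing that the inclusion is strict. The statement of the corollary asserts that this witness is provided by Example~2, so the core work is essentially bookkeeping on top of the already-proved \cref{thm main} and \cref{uo}.

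For the easy direction, if $\mathcal{H}$ is uniform online learnable with uniform bound $m \in \mathbb{N}^+$ via some algorithm $\mathcal{A}$, then the same algorithm certifies non-uniform online learnability by taking the constant mistake map $m(h) \equiv m$. So every uniform online learnable class is non-uniform online learnable.

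For strictness, the plan is to exhibit (or invoke, from Example~2) a hypothesis class $\mathcal{H}$ that can be written as a countable union $\mathcal{H} = \bigcup_{n \in \mathbb{N}^+} \mathcal{H}_n$ where each $\mathcal{H}_n$ has finite Littlestone dimension, yet $\textbf{Ldim}(\mathcal{H}) = \infty$. The canonical candidate is the class of threshold functions on $\mathbb{N}$, namely $\mathcal{H} = \{h_n : n \in \mathbb{N}^+\}$ with $h_n(x) = \mathbf{1}[x \le n]$; each singleton $\{h_n\}$ has Littlestone dimension $0$, so $\mathcal{H}$ is a countable union of Littlestone classes, while a standard argument shows one can build arbitrarily deep shattered trees using binary search on $\mathbb{N}$, giving $\textbf{Ldim}(\mathcal{H}) = \infty$. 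By \cref{thm main}, such an $\mathcal{H}$ is non-uniform online learnable; by \cref{uo}, since $\textbf{Ldim}(\mathcal{H}) = \infty$, it is not uniform online learnable. Combining the inclusion and this separating example yields the strict containment.

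I do not expect any real obstacle here: the whole content is packaged in \cref{thm main} and \cref{uo}, and the only substantive ingredient is the existence of a countable union of Littlestone classes whose Littlestone dimension is infinite — which Example~2 supplies. The main care point is simply to confirm that the example indeed satisfies both properties (countable union of finite-Littlestone-dimension classes, and infinite Littlestone dimension as a whole), which for thresholds on $\mathbb{N}$ is immediate.
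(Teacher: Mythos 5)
Your proof is correct and follows the paper's intended argument: the forward inclusion is trivial (take the constant bound), and strictness comes from a countable class of thresholds with infinite Littlestone dimension, combined with \cref{thm main} and \cref{uo}. The only cosmetic difference is that you construct the witness from integer thresholds on $\mathbb{N}$ (which is the paper's Example~1, up to the direction of the inequality) rather than the rational thresholds on $[0,1]$ of Example~2 that the paper cites, but both serve equally well as separating examples.
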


Our second result shows that non-uniform online learnability is in fact equivalent to non-uniform stochastic online learnability.

\begin{theorem}\label{thm main2}
    $\mathcal{H}$ is non-uniform stochastic online learnable $\iff$ it's non-uniform online learnable.
\end{theorem}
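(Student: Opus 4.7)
The plan splits by direction. The forward direction (non-uniform online learnable implies non-uniform stochastic online learnable) is immediate: the adversarial bound $err_{\mathcal{A}}(h,x)\le m(h)$ holds pointwise for every sequence $x\in X$, hence almost surely under $\mu^\infty$ for any $\mu$. All substance lies in the reverse direction.

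For the reverse direction, my strategy is to invoke \cref{thm main} and equivalently show that stochastic learnability forces $\mathcal{H}$ to be a countable union of Littlestone classes. Given a deterministic stochastic learner $\mathcal{A}$ with bound $m:\mathcal{H}\to\mathbb{N}^+$, I would define $\mathcal{H}_n=\{h\in\mathcal{H}:m(h)\le n\}$; these trivially cover $\mathcal{H}$, so it remains to show each $\mathcal{H}_n$ is a Littlestone class.

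To prove $\textbf{Ldim}(\mathcal{H}_n)<\infty$, I would argue by contradiction via a Littlestone-tree lifting. Assuming $\mathcal{H}_n$ admits a depth-$(n{+}1)$ shattered tree on nodes $x_1,\ldots,x_{2^{n+1}-1}$, first run the standard SOA-style adversarial simulation against $\mathcal{A}$: at step $t$, feed $\mathcal{A}$ the prefix built so far, set $y_t$ opposite to its prediction, and descend the tree accordingly. This produces a root-to-leaf sequence $(x_{i_1},y_1),\ldots,(x_{i_{n+1}},y_{n+1})$ on which $\mathcal{A}$ is forced to err $n{+}1$ times, and by the shattered-tree property there exists $h\in\mathcal{H}_n$ consistent with these path labels (off-path values of $h$ are arbitrary and irrelevant). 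Next, lift to a stochastic bad event: let $\mu$ be uniform on the $N=2^{n+1}-1$ tree nodes; the event $E$ that the first $n{+}1$ iid draws from $\mu$ equal $(x_{i_1},\ldots,x_{i_{n+1}})$ in order has probability $N^{-(n+1)}>0$. Conditional on $E$ the observed history exactly matches the adversarial prefix, so by determinism $\mathcal{A}$ reproduces the $n{+}1$ errors on $h$, contradicting the almost-sure bound $err_{\mathcal{A}}(h,x)\le n$.

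The main subtlety is this last coupling step: one must argue that $\mathcal{A}$'s predictions under iid sampling coincide on $E$ with those in the offline adversarial simulation. This is exactly where the deterministic-algorithm assumption is essential, as it identifies predictions through equality of observed histories; without determinism one would additionally couple the random bits. Everything else—the SOA path construction, the existence of a shattered-tree-consistent $h$, and the probability calculation for $E$—is routine.
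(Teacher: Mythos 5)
Your proof is correct, and it takes a recognizably different route from the paper's, though both rest on the same essential mechanism: embedding an adversarial prefix into an iid sample with strictly positive probability and exploiting the determinism of $\mathcal{A}$ to transfer the error count between the two settings. The paper argues by direct contradiction and stays entirely within the two learnability definitions: assuming $\mathcal{H}$ is stochastically learnable with bound $n(\cdot)$ but not adversarially learnable, it extracts a single hypothesis $h$ with unbounded adversarial errors and a bad sequence $x$, then puts a fully supported discrete measure $\mu$ on the entries of $x$ so that the iid draw reproduces $x$ on a long enough prefix with positive probability, yielding more than $n(h)$ errors almost nowhere — contradiction. You instead route through Theorem \ref{thm main}: you define the level sets $\mathcal{H}_n=\{h:m(h)\le n\}$, show $\textbf{Ldim}(\mathcal{H}_n)\le n$ by running the standard Littlestone-tree lower-bound construction against $\mathcal{A}$ and then coupling its path into iid draws from the uniform measure on the tree nodes, and conclude via the characterization in Theorem \ref{thm main}. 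Your route is slightly less self-contained (it invokes Theorem \ref{thm main}), but it buys an explicit bound $\textbf{Ldim}(\mathcal{H}_n)\le n$ on each level set; the paper's argument is more direct and does not presuppose the Littlestone-union characterization. One cosmetic point to tidy in your write-up: the tree nodes $x_1,\ldots,x_{2^{n+1}-1}$ need not be distinct elements of $\mathcal{X}$, so $\mathbb{P}(E)$ need not equal $N^{-(n+1)}$ exactly — but $E$ still has strictly positive probability as a finite product of positive masses, which is all the argument uses.
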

\begin{proof}
We only need to discuss the only if direction, since the other direction is immediate from the definitions. It's equivalent to showing the following two are incompatible: (1) $\mathcal{H}$ is non-uniform stochastic online learnable; (2) $\mathcal{H}$ is not non-uniform online learnable.

Suppose (1) and (2) both hold, we would like to derive a contradiction. (1) implies that there exists a learning algorithm $\mathcal{A}$ such that for any $h\in \mathcal{H}$, there is a constant $n(h)$, for any $\mu$, the number of errors made is bounded by $n(h)$ with probability 1.

However, (2) implies for the same algorithm $\mathcal{A}$, there exists a $h\in \mathcal{H}$, such that for any $n$, there is a sequence $x^{(n)}\in X$ that the number of errors is at least $n$ by some finite time $t_n$. Now, pick $x=x^{(n(h)+1)}$, we define an arbitrary discrete probability measure $\mu$ over $x$ with full support. For example, we can define 
$$
\mu(x^{(n(h)+1)}_i)=\frac{1}{2^i}, \forall i\in \mathbb{N}^+.
$$

There is positive possibility that in the first $t_{(n(h)+1)}$ steps, the sequence is identical to $x$, which incurs $n(h)+1>n(h)$ errors, contradicting (1). In particular, the probability is lower bounded by
$$
\Pi_{i=1}^{t_{(n(h)+1)}} \frac{1}{2^i}=\frac{1}{2^{\sum_{i=1}^{t_{(n(h)+1)}}i}}\ge \frac{1}{2^{t^2_{(n(h)+1)}}}>0.
$$
\end{proof}

The two theorems together imply that for both the strongest (adaptive) and weakest (stochastic) Nature's choice on $x$, the characterization for learnability is the same, therein applies to any other setting in between such as stochastic process \cite{hanneke2021learning} or dynamically changing environment \cite{wu2023online}. 
As a result, Theorem \ref{thm main} and Theorem \ref{thm main2} together imply our main result \ref{thm ind} on inductive inference, now that all natural choices of observation sequences share the same characterization: $\mathcal{H}$ is a countable union of Littlestone classes.


\section{Non-uniform Guarantees in the Agnostic Setting}

The realizable assumption implies a belief that the law of nature already lies in a set of hypotheses known to Learner. This however doesn't match the practical scenarios of scientific progress where theories are usually proven not perfectly correct along with the development of science. 

As a result, the more realistic way is to consider the agnostic setting, in which we pose no constraint on how Nature presents $y_t$, with Learner's objective to be performing as good as the best hypothesis $h^*$ from the whole class. We obtain the following regret bound showing that any countable union of Littlestone classes is also learnable in this setting.

\begin{theorem}\label{thm ag}
    When $\mathcal{H}=\cup_{n\in \mathbb{N}^+} \mathcal{H}_n$ is a countable union of hypothesis classes with finite Littlestone dimension $d_n$, then it's agnostic non-uniform learnable at rate $\tilde{O}(\sqrt{T})$, with expected regret
    $$
    \mathbb{E}\left[\sum_{t=1}^T 1_{[\hat{y}_t\ne y_t]}-
\sum_{t=1}^T 1_{[h(x_t)\ne y_t]}\right]=\tilde{O}\left((\sqrt{d_n}+\log n) \sqrt{T}\right)
    $$ 
    for any $x,y,T$ and $h \in \mathcal{H}_n$. Under $\tilde{O}$ we hide logarithmic dependences except for $n$.
\end{theorem}

\begin{algorithm}
\caption{Agnostic Non-uniform Online Learner}
\label{apol}
\begin{algorithmic}[1]
\STATE Input: a hypothesis class $\mathcal{H}=\cup_{n\in \mathbb{N}^+} \mathcal{H}_n$ with $d_n=\textbf{Ldim}(\mathcal{H}_n)<\infty, \forall n\in \mathbb{N}^+$.
\STATE For each $\mathcal{H}_n$, initialize a Hierarchical FPL instance $\mathcal{A}_n$ with experts being the set of Algorithm \ref{expert} instances with $L\le d_n$ and learning rate $\eta_t=\frac{1}{\sqrt{t}}$. The complexity of an instance with $i_L=j$ is $1+(d_n+2)\log j$.
\STATE Initialize a complexity $k_n=2(\log n+1)$ for each expert $\mathcal{A}_n$.
\FOR{$t\in \mathbb{N}^+$}
\STATE Choose learning rate $\eta_t=\frac{1}{\sqrt{t}}$.
\STATE Sample a random vector $q\sim \text{exp}$, i.e. $\mathbb{P}(q_i=\lambda)=e^{-\lambda}$ for $\lambda \ge 0$ and all $i\in \mathbb{N}^+$.
\STATE Output prediction $\hat{y}_t$ of expert $\mathcal{A}_n$ which minimizes
$$
\sum_{j=1}^{t-1}1_{[\mathcal{A}_n(x_j)\ne y_j]}+\frac{k_n-q_n}{\eta_t}.
$$
\STATE Observe the true label $y_t$.
\ENDFOR
\end{algorithmic}
\end{algorithm}

Furthermore, we show Theorem \ref{thm ag} indeed provides a tight characterization by the following result, that any hypothesis class which can't be written as a countable union of Littlestone classes requires arbitrarily slow rates, with a different spirit than the typical $\Omega(\sqrt{dT})$ lower bound in \cite{ben2009agnostic}.

\begin{proposition}\label{ag con}
    For any hypothesis class $\mathcal{H}$, if it can't be written as a countable union of Littlestone classes, then it's not agnostic non-uniform online learnable at rate $ T^{1-\epsilon}$ for any $\epsilon>0$.
\end{proposition}

On the other hand, it's well-known an $\Omega(\sqrt{T})$ lower bound is inevitable, even for the simplest case with two hypothesis classes differing on only one point.

\begin{proposition}\label{prop}
    As long as $|\mathcal{H}|>1$, then for any Learner's algorithm and $T\in \mathbb{N}^+$, the expected regret at time $T$ is at least $\Omega(\sqrt{T})$. 
\end{proposition}

As a result, we reach the following trichotomy that provides a complete characterization of agnostic non-uniform online learnability, including degenerate, typical, and arbitrarily slow rates, which can be seen as an online agnostic analogue to the universal learning trichotomy of \cite{bousquet2021theory}.

\begin{theorem}\label{tri}
    There are only three possible rates of agnostic non-uniform online learning: 
    \begin{itemize}
    \item $\mathcal{H}$ is learnable at rate $0$ $\iff$ $|\mathcal{H}|=1$.
    \item $\mathcal{H}$ is learnable at rate $\tilde{\Theta}(\sqrt{T})$ $\iff$ $\mathcal{H}$ is a countable union of Littlestone classes.
    \item $\mathcal{H}$ requires arbitrarily slow rates $\iff$ $\mathcal{H}$ isn't a countable union of Littlestone classes.
\end{itemize}
\end{theorem}

\section{Non-uniformity Versus Consistency in the Realizable Setting}
In addition to hypothesis-wise guarantees, we can also consider the weaker notion of consistency, which needs no uniformity over $x$ for a fixed $h$ and only requires the number of errors to be finite. We will restrict our attention to the realizable setting only, since non-hypothesis-wise guarantees are ill-defined in the agnostic setting.

\begin{definition}[Consistency]
    A hypothesis class $\mathcal{H}$ is consistent, if there is a deterministic algorithm $\mathcal{A}$, such that $err_{\mathcal{A}}(h,x)<\infty$ for any $(h,x)$ chosen by Nature.
\end{definition}

In the classic online learning setting, it was proven by \cite{bousquet2021theory} that a hypothesis class $\mathcal{H}$ is consistent, if and only if $\mathcal{H}$ has an infinite-depth shatter tree. Example \ref{ex1} (integer thresholds) is consistent but not online learnable, showing that uniformity $\ne$ consistency in the classic online learning setting. 

To characterize consistency in our setting where Nature must fix $h^*$ beforehand, we introduce the following definition on the size of an infinite (binary) tree, such that besides the set of nodes $V$, each edge is marked with some number $z\in \{0,1\}$ where left edges are 0 and right edges are 1.

\begin{definition}[Size of infinite tree]\label{rtree}
    Given a hypothesis class $\mathcal{H}$, for any infinite tree, a countable branch $b=(v_1,z_1,v_2,z_2,\cdots)$ where $v_{t+1}$ is the $z_t$-child of $v_t$, is called realizable if there is some $h\in \mathcal{H}$ consistent on $b$. The size of this tree is 
    $$|B=\{b| b\ \textbf{is realizable}\}|\subset \mathbb{N}\cup \aleph_0 \cup \aleph_1.$$
    In addition, we say the tree is full-size if all branches are realizable.    
\end{definition}

Example \ref{ex2} (rational thresholds) has an infinite-depth shatter tree yet no $\aleph_1$-size trees. Example \ref{ex3} (real thresholds), however, has a full-size tree. Clearly, when $\mathcal{H}$ has a full-size tree, it's not consistent, since any deterministic Learner must make no correct predictions on one of the branches. We obtain a stronger necessary condition for consistency by the following result.

\begin{theorem}\label{thm con}
    If a hypothesis class $\mathcal{H}$ is consistent in the non-uniform (stochastic) online learning setting, it doesn't have any $\aleph_1$-size tree in which tree nodes belong to $\mathcal{X}$.
\end{theorem}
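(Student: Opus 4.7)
The plan is to prove the contrapositive: assuming $\mathcal{H}$ admits an $\aleph_1$-size tree $T$ whose nodes lie in $\mathcal{X}$, I show every deterministic algorithm $\mathcal{A}$ is driven into infinitely many errors by some realizable branch, and hence $\mathcal{H}$ is not consistent. Let $B$ denote the uncountable family of realizable branches of $T$.

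The crucial structural fact is that $T$, as an infinite binary tree, has only countably many positions (each position is a finite $\{0,1\}$-string, and $\{0,1\}^{<\omega}$ is countable). For each position $u$ in $T$, let $c(u)$ denote the branch that follows the unique root-to-$u$ path in $T$ and, from $u$ onward, always proceeds in the direction predicted by $\mathcal{A}$ given the history of observations up to that point. This yields a countable collection $\mathcal{C} = \{c(u) : u \in T\}$ of branches.

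The key claim is that any realizable branch $b = (v_1, z_1, v_2, z_2, \ldots)$ on which $\mathcal{A}$ errs only finitely often must belong to $\mathcal{C}$. Indeed, if $t_0$ strictly exceeds the index of $\mathcal{A}$'s last error on $b$, then from depth $t_0$ onward each $z_t$ equals $\mathcal{A}$'s prediction along $b$, so $b$ coincides with $c(u)$ where $u$ is the depth-$(t_0-1)$ position along $b$. Contrapositively, every branch in $B \setminus \mathcal{C}$ induces infinitely many errors. Since $|B| \ge \aleph_1 > \aleph_0 \ge |\mathcal{C}|$, such a branch $b^*$ exists; choosing any $h^* \in \mathcal{H}$ realizing $b^*$ and letting Nature present $x_t = v_t^{b^*}$ yields $y_t = z_t^{b^*} = h^*(x_t)$ with infinitely many errors for $\mathcal{A}$, contradicting consistency.

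For the stochastic version, the adaptive argument must be upgraded: one needs a single distribution $\mu$ for which $\mu^\infty$-sampled sequences drive $\mathcal{A}$ into infinite errors with positive probability. This is not automatic, since the event that iid samples exactly follow the adaptive bad branch $b^*$ has probability zero under any discrete $\mu$. I expect this to be the main technical obstacle; a natural route is to leverage the uncountability of $B$ to obtain an uncountable family of branches outside $\mathcal{C}$, choose $\mu$ supported on the countable set of $\mathcal{X}$-labels appearing in their nodes with strictly positive weights, and then argue via a Borel--Cantelli style estimate (or a tree-measurability argument) that with positive probability the random sequence realizes a bad branch and thus incurs infinitely many errors.
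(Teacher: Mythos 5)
Your argument for the adversarial case is correct and takes a genuinely different, and I would say cleaner, route than the paper. The paper fixes a deterministic consistent algorithm $\mathcal{A}$, partitions $B$ by the number of errors $m_b$ that $\mathcal{A}$ makes on each branch, extracts an uncountable cell $B_{m^*}$, and then proves a structural lemma (every $\aleph_1$-size tree has a node both of whose subtrees are $\aleph_1$-size) in order to build $m^*+2$ shattered levels and force a branch with $m^*+1$ errors, contradicting the cell's defining bound. Your proof bypasses all of this: you observe that the set of branches on which $\mathcal{A}$ errs only finitely often is contained in the countable family $\mathcal{C} = \{c(u) : u \text{ a position of } T\}$, because once $\mathcal{A}$ stops erring the branch is determined by its prefix plus $\mathcal{A}$'s deterministic continuation. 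Since $\mathcal{C}$ is countable and $B$ is uncountable, some realizable branch $b^*$ lies outside $\mathcal{C}$ and hence produces infinitely many errors. This is a direct cardinality argument with no need for the subtree lemma, and it yields the slightly sharper conclusion that some branch forces infinitely many errors (the paper only exhibits a branch forcing $m^*+1 > m^*$). What the paper's route buys in exchange is the bifurcation lemma about $\aleph_1$-size trees, which may be of independent structural interest.

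On the stochastic clause: you are right to flag it as a gap in your sketch, but note that the paper's own proof also only treats the adversarial case -- the strategies $s_b$ deterministically present a chosen branch, which is not an iid sample -- so on this point your proposal is no less complete than the paper's argument. The Borel--Cantelli route you gesture at is a reasonable direction, but as written it is a sketch, not a proof; the central difficulty you correctly identify is that under any fixed $\mu$ the event of exactly tracing a prescribed bad branch has probability zero, so one cannot simply invoke $b^*$. If you want to close this, you would need to argue that for a suitably chosen $\mu$ with countable support along the tree, the random trajectory almost surely visits infinitely many positions at which $\mathcal{A}$'s continuation disagrees with a realizable continuation, and that these disagreements accumulate into infinitely many errors against a single $h^*$ -- a nontrivial measurability and selection argument that neither your proposal nor the paper actually carries out.
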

\begin{proof}
    Suppose for contradiction that there exists an $\aleph_1$-size tree $\mathcal{T}$ while $\mathcal{H}$ is consistent at the same time. For each $b\in B$, let $h_b\in \mathcal{H}$ be some hypothesis consistent with $b$. Consider a set $S=\cup_{b\in B} s_b$ of Nature's strategies, where by $s_b$ Nature fixes $h_b$ beforehand and shows the branch $b$ to Learner.

    Now, since $\mathcal{H}$ is consistent, there is some Learner's algorithm $\mathcal{A}$ such that it makes $m_b<\infty$ errors when Nature plays $s_b$ for any $b\in B$. Denote $B_m=\{b|m_b=m\}$, we have that $B=\cup_{m\in \mathbb{N}} B_m$. On the one hand, any countable union of countable sets is still countable, while on the other hand, $|B|=\aleph_1$, therefore there must exists some $m^*<\infty$ such that $|B_{m^*}|=\aleph_1$. 

    Consider a new hypothesis set $\mathcal{H}^*=\{h\in \mathcal{H}|h\in B_{m^*}\}$. Since it's a subset of $\mathcal{H}$, $\mathcal{A}$ is also consistent on $\mathcal{H}^*$. In particular, the tree $\mathcal{T}$ is still $\aleph_1$-size. Central to our proof is the claim that for any $\aleph_1$-size tree, there exists a node of the tree, such that both induced sub-trees of its children are $\aleph_1$-size.

    Suppose for contradiction the claim is false, denote the root as $N_1$, we can recursively define an infinite sequence $N_2,N_3,\cdots$ such that for any $t\ge 2$, $N_t$ is the child of $N_{t-1}$ and $N_t$'s brother's induced sub-tree (denoted as $\mathcal{T}_t$) is at most $\aleph_0$-size. Now the set of realizable branches is countable: besides the single branch defined by the infinite sequence, any other branch falls in one sub-tree $\mathcal{T}_t$, and there are only countably many such $\aleph_0$-size sub-trees $\mathcal{T}_t$. This contradicts the fact that the tree is $\aleph_1$-size and proves the claim.

    By repeatedly using the claim, we can find $m^*+2$ levels of nodes of the tree $\mathcal{T}$
    $$
    v_{(1,1)}, v_{(2,1)}, v_{(2,2)},\cdots, v_{(m^*+2,1)},\cdots, v_{(m^*+2,2^{m^*+1})}
    $$
    where $v_{(i,j)}$ is the ancestor of $v_{(i+1,2j-1)}, v_{(i+1,2j)}$, and each $v_{(i,j)}$ induces an $\aleph_1$-size sub-tree. Notice that any node $v_{(i,j)}$ need not to be a $i$-depth node in the tree $\mathcal{T}$, and $v_{(i,j)}$ may be a multi-generation grandfather of $v_{(i+1,2j-1)}, v_{(i+1,2j)}$.

    Now, by construction $\mathcal{H}^*$ shatters the $m^*+2$ levels of nodes, therefore any deterministic algorithm including $\mathcal{A}$ will make at least $m^*+1$ errors on one of the finite branches (if we see the levels of nodes as a tree). This however contradicts the definition of $B_{m^*}$, which completes the proof.
\end{proof}

\begin{figure}[ht]
    \centering
    \includegraphics[scale=0.6,width=0.6\textwidth]{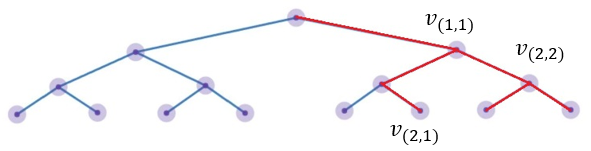}
    \caption{An example of $v_{(i,j)}$. Here the red paths denote realizable branches.}
    \label{fig}
\end{figure}

In addition, when $\mu$ is known to Learner in the non-uniform stochastic online learning setting, we strengthen a result of \cite{wu2021non} in which they name consistency as EAS.

\begin{definition}[EAS online learnability]
    A class $\mathcal{H}$ is eas-online learnable w.r.t distribution $\mu$, if there exists a learning algorithm $\mathcal{A}$ such that for all $h\in \mathcal{H}$ and $x\sim \mu^{\infty}$
$$
\mathbb{P}\left(err_{\mathcal{A}}(h,x)<\infty \right)=1.
$$
\end{definition}

They proved that $\mathcal{H}$ is eas-online learnable if and only if $\mathcal{H}$ has a countable effective cover w.r.t $\mu$ (Theorem 7 in \cite{wu2021non}), where $\mathcal{H}'$ is said to be an effective cover if for any $h\in \mathcal{H}$, there exists $h'\in \mathcal{H}'$ such that (we say $h$ is covered by $h'$) $\mu\{h(x)\ne h'(x)\}=0$. We strengthen this result by showing that in the same setting of \cite{wu2021non}, non-uniformity = consistency.

\begin{proposition}\label{peas}
When $\mu$ is known to Learner, a class $\mathcal{H}$ is non-uniform stochastic online learnable if and only if $\mathcal{H}$ has a countable effective cover w.r.t $\mu$.
\end{proposition}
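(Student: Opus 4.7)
The plan is to prove the two directions separately; the forward direction is a short reduction to \cite{wu2021non}, while the reverse is the main content, obtained by feeding the cover into \Cref{pol}. For the forward direction, if $\mathcal{A}$ and $m:\mathcal{H}\to\mathbb{N}^+$ witness non-uniform stochastic online learnability (with $\mathcal{A}$ allowed to depend on the known $\mu$), then the finiteness of $m(h)$ immediately gives $\Pr(err_{\mathcal{A}}(h,x)<\infty)=1$ for every $h\in\mathcal{H}$, so $\mathcal{A}$ also witnesses eas-online learnability of $\mathcal{H}$ w.r.t.\ $\mu$, and the characterization of \cite{wu2021non} delivers a countable effective cover.

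For the reverse direction, let $\mathcal{H}'=\{h'_1,h'_2,\ldots\}$ be a countable effective cover w.r.t.\ $\mu$; since $\mu$ is known in advance, Learner can fix such an enumeration once and for all. I would set $\mathcal{H}_n:=\{h'_n\}$ so that each $\mathcal{H}_n$ has Littlestone dimension $0$, and run \Cref{pol} on $\bigcup_n\mathcal{H}_n=\mathcal{H}'$ against the label stream $y_t=h^*(x_t)$ produced by the unknown target $h^*\in\mathcal{H}$. It is important that \Cref{pol} is defined purely in terms of observed labels, so Learner need not know which cover element actually covers $h^*$; each subroutine $\mathcal{A}_n$ simply predicts $h'_n(x_t)$.

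The heart of the argument is then a short almost-sure consistency claim: letting $k=k(h^*)$ be the least index with $\mu\{h^*\ne h'_k\}=0$, a countable union bound over $t$ shows that the event $\{h^*(x_t)=h'_k(x_t)\text{ for every }t\in\mathbb{N}^+\}$ has probability one under $\mu^\infty$. On this almost-sure event the observed stream is consistent with $h'_k$, so $\mathcal{A}_k$ never errs and $e(t,k)+k\le k$ for all $t$. Replaying the bookkeeping from the proof of \Cref{thm main} with $d_k=0$ shows that only $\mathcal{A}_1,\ldots,\mathcal{A}_k$ are ever invoked, and each $\mathcal{A}_n$ contributes at most $k-n+1$ meta-errors before the condition $e_n+n\le k$ is violated, for a total of at most $k(k+1)/2\le k(h^*)^2$ errors almost surely; setting $m(h^*):=k(h^*)^2$ closes the loop. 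The only minor obstacle is that $h^*$ itself need not lie in $\mathcal{H}'$, but the almost-sure label-level agreement with $h'_k$ renders this harmless since \Cref{pol} consumes only labels.
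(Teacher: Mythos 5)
Your proof is correct. The direction from learnability to the existence of a countable effective cover is handled identically to the paper (both reduce to the characterization of \cite{wu2021non}). The difference is in the constructive direction. The paper runs a much simpler algorithm tailored to the covering structure: after observing each label, predict with $h'_n$ for the \emph{smallest} index $n$ among cover elements still consistent with the observed history. On the almost-sure event that $h'_k$ agrees with $h^*$ on the entire stream, $h'_k$ is never eliminated, and every error permanently removes a smaller index, giving at most $m(h^*)=k$ errors. You instead feed the singleton decomposition $\mathcal{H}_n=\{h'_n\}$ into \Cref{pol} and replay the bookkeeping of \Cref{thm main} with $d_n=0$, obtaining the bound $k(k+1)/2$. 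Both are valid; your route has the appeal of reusing the already-proved machinery of \Cref{thm main} and making the proposition a near-corollary, while the paper's direct argument is shorter and produces the sharper linear (rather than quadratic) hypothesis-wise bound. You correctly identify the one subtlety that must be handled in either approach: $h^*$ itself may not belong to $\mathcal{H}'$, and the argument goes through because the learner observes only labels and the covering element $h'_k$ is label-equivalent to $h^*$ on a full-measure set of streams.
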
 

\begin{table*}[ht] 
\begin{tabular}{ |c| c c c c|}
\hline
& any $x$, varying $h^*$ & any $x$, fixed $h^*$ & unknown $\mu$, fixed $h^*$& known $\mu$, fixed $h^*$\\
\hline
non-uniform &NA& {\color{red}$\cup_{\aleph_0}$ LS} =& {\color{red}$\cup_{\aleph_0}$ LS} $\underset{?}{\subset}$ & {\color{red}$\aleph_0$ effective cover}\\
& & $\ques$ & & $\veq$\\
consistent & no $\infty$ tree $\subsetneq$& {\color{blue}no $\aleph_1$-size tree} & ? & $\aleph_0$ effective cover\\
\hline
\end{tabular}
\label{tab}
\caption{Summary of learnabilities in the non-uniform realizable setting. LS denotes any Littlestone class. Our results are marked in red. Necessary conditions are marked in blue.}
\end{table*}

\section{Discussion}

\paragraph{Summary of Results} We resolve a long-standing philosophical question, the question of inductive inference, via a novel link to online learning theory. Different from previous results on inductive inference, which only provide a sufficient condition that the hypothesis class has a countable size, we provide a sufficient and necessary condition: inductive inference is possible, if and only if the hypothesis class is a countable union of online learnable classes. We hope this connection offers a fresh perspective that could open new avenues for research in both fields.

From the technical side, we develop a novel framework called non-uniform online learning, where the ground-truth hypothesis is determined in advance and hypothesis-wise error bounds are considered. In both realizable and agnostic settings, we prove that the hypothesis class $\mathcal{H}$ being a countable union of Littlestone classes is the sufficient and necessary condition for learnability. Our results provide a sharp characterization of inductive inference beyond countable-sized realizable hypotheses.

\paragraph{Philosophical Interpretations}
Our findings reveal an intrinsic trade-off between $d_n$ and $n$ in both the error bound of Theorem \ref{thm main} the regret bound of Theorem \ref{thm ag}, depending on how we make the countable partition $\mathcal{H}=\cup_{n\in \mathbb{N}^+} \mathcal{H}_n$. A finer partition can decrease the Littlestone dimension $d_n$ of the class $\mathcal{H}_n$ that $h$ belongs to for every $h\in \mathcal{H}$, however, at the cost of creating more classes and increasing the dependence on $n$. Such trade-off shows a connection to certain philosophical debates around falsifiability, such as ``the mission is to classify truths, not to certify them" \cite{miller2015critical}.

\paragraph{Future Directions} 
Identifying a sufficient condition for consistency remains an unresolved challenge. We conjecture that having no $\aleph_1$-size tree is a tight characterization. Understanding inclusion relations between different learnabilities, as outlined in the table, is also an important problem.
The current work prioritizes error bounds without delving into computational constraints. 
It's interesting to study non-uniform online learnability with computable learners for this theory's practical applicability, as in \cite{agarwal2020learnability,sterkenburg2022characterizations, hasrati2023computable}.

\newpage

\bibliographystyle{plainnat}
\bibliography{Xbib}

\newpage
\appendix
\section{Algorithms from Previous Work}

\begin{algorithm}
\caption{Standard Optimal Algorithm (SOA)}
\label{soa}
\begin{algorithmic}[1]
\STATE Input: a hypothesis class $\mathcal{H}$. 
\STATE Initialize the active hypothesis set $V_1=\mathcal{H}$.
\FOR{$t\in \mathbb{N}^+$}
\STATE Observe $x_t$.
\STATE For $y\in \{0,1\}$, let $V_t^{y}=\{h: h(x_t)=y, h\in V_t\}$.
\STATE Predict $\hat{y}_t=\text{argmax}_y \textbf{Ldim}(V_t^{y})$.
\STATE Observe the true label $y_t$.
\STATE Update $V_{t+1}=V_t^{y_t}$ if $y_t\ne \hat{y}_t$. Otherwise set $V_{t+1}=V_t$.
\ENDFOR
\end{algorithmic}
\end{algorithm}

\begin{algorithm}
\caption{Expert$(i_1,\cdots,i_L)$}
\label{expert}
\begin{algorithmic}[1]
\STATE Input: a hypothesis class $\mathcal{H}$, indices $i_1<\cdots<i_L$.
\STATE Initialize the active hypothesis set $V_1=\mathcal{H}$.
\FOR{$t\in \mathbb{N}^+$}
\STATE Observe $x_t$.
\STATE For $y\in \{0,1\}$, let $V_t^{y}=\{h: h(x_t)=y, h\in V_t\}$.
\STATE Predict $\hat{y}_t=\text{argmax}_y \textbf{Ldim}(V_t^{y})$.
\STATE Observe the true label $y_t$.
\STATE Update $V_{t+1}=V_t^{y_t}$ if $y_t\ne \hat{y}_t$ and $t\in \{i_1,\cdots,i_L\}$. Otherwise set $V_{t+1}=V_t$.
\ENDFOR
\end{algorithmic}
\end{algorithm}

\begin{algorithm}
\caption{Follow the Perturber Leader (FPL)}
\label{fpl}
\begin{algorithmic}[1]
\STATE Input: a countable class of experts $E_1,E_2,\cdots$. 
\STATE Initialize a complexity $k_i$ for each expert $E_i$, such that $\sum_{i\in \mathbb{N}^+} e^{-k_i}\le 1$.
\FOR{$t\in \mathbb{N}^+$}
\STATE Choose learning rate $\eta_t$.
\STATE Sample a random vector $q\sim \text{exp}$, i.e. $\mathbb{P}(q_i=\lambda)=e^{-\lambda}$ for $\lambda \ge 0$ and all $i\in \mathbb{N}^+$.
\STATE Output prediction $\hat{y}_t$ of expert $E_i$ which minimizes
$$
\sum_{j=1}^{t-1}1_{[E_i(x_j)\ne y_j]}+\frac{k_i-q_i}{\eta_t}.
$$
\STATE Observe the true label $y_t$.
\ENDFOR
\end{algorithmic}
\end{algorithm}
\section{Examples}
\begin{example}\label{ex1}
    Consider $\mathcal{X}=\mathbb{N}^+$ and let $\mathcal{H}$ be the set of all threshold functions
    $$
    \mathcal{H}=\{h: h(x)=1_{[x\ge n]}, n\in \mathbb{N}\}.
    $$
    The Littlestone dimension $\textbf{Ldim}(\mathcal{H})=\infty$. However, there is a simple learning algorithm with finite errors, by keeping predicting 0 until the first mistake. It's clear $\mathcal{H}$ is consistent but not uniformly classic online learnable.
\end{example}

\begin{example}\label{ex2}
    Consider $\mathcal{X}=[0,1]$, and the class of all rational thresholds
$$
\mathcal{H}=\{h: h(x)=1_{[x\ge c]}, c\in \mathbb{Q}\}.
$$
This hypothesis class has infinite Littlestone dimension, thereby not uniformly online learnable. However, We have the countable decomposition $\mathcal{H}=\cup_{c\in [0,1], c\in \mathbb{Q}} \mathcal{H}_c$, where each $\mathcal{H}_c$ is a singleton $\mathcal{H}_c=\{1_{[x\ge c]}\}$. Thus it's non-uniformly online learnable.

\end{example}

\begin{example}\label{ex3}
    Consider $\mathcal{X}=[0,1]$, and the class of all real thresholds
$$
\mathcal{H}=\{h: h(x)=1_{[x\ge c]}, c\in \mathbb{R}\}.
$$
This hypothesis class is not even EAS online learnable under the uniform distribution \cite{wu2021non}, thus not consistent. 

In the language of Definition \ref{rtree}, $\mathcal{H}$ has a full-size infinite shatter tree. The naive way of setting the tree nodes as $(\frac{1}{2},\frac{1}{4},\frac{3}{4},\frac{1}{8},\cdots)$ doesn't give a full-size tree, because the limit of some converging branch can be one of the seen nodes. For example, consider a branch with $x=(\frac{1}{2},\frac{3}{4},\frac{5}{8},\frac{9}{16},\cdots)$, the limit $\frac{1}{2}$ requires the only possible consistent hypothesis to be $h(x)=1_{[x\ge \frac{1}{2}]}$, which contradicts the first round error $h(\frac{1}{2})=0$.

Instead, we can shrink the window size at each round to prevent the convergence to any node seen. For example, we can half the window size then take the middle point to construct the tree which is a full-size infinite tree
$$
\{\frac{1}{2},\frac{1}{4},\frac{3}{4},\frac{3}{16},\frac{5}{16},\frac{11}{16},\frac{13}{16},\frac{11}{64}\cdots\}
$$

\end{example}

\section{Omitted Proofs}

We include here the omitted proofs from the main-text, which are either straightforward or standard in literature.

\subsection{Proof of Lemma \ref{online}}
\begin{proof}
The first claim is proven by the existence of a shatter tree with depth $\textbf{Ldim}(\mathcal{H})$. Let $d=\textbf{Ldim}(\mathcal{H})$ and set $x_t$ to be $x_{i_t}$ as in the definition, then if Nature chooses $y_t=-\hat{y}_t$, Learner makes $d$ mistakes. The existence of consistent hypotheses is guaranteed by the definition of a shatter tree.

For the second claim, it suffices to prove that whenever SOA makes an error, the Littlestone dimension of the active hypothesis set strictly decreases. Assume for the contrary that $\textbf{Ldim}(V_{t+1})=\textbf{Ldim}(V_t)$ but $y_t\ne \hat{y}_t$. The definition of $\hat{y}_t$ implies $\textbf{Ldim}(V_t)=\textbf{Ldim}(V_t^1)=\textbf{Ldim}(V_t^0)$. In this case, we obtain a shatter tree with depth $\textbf{Ldim}(V_t)+1$ for $V_t$, which is a contradiction.
\end{proof}

\subsection{Proof of Lemma \ref{uo}}

\begin{proof}
We only need to discuss the only if direction, since it's clear from the definition any online learnable problem is automatically uniform online learnable. The claim is reduced to showing uniform online learnability implies online learnability. 

Suppose $\mathcal{H}$ is uniform online learnable, then there is a learning algorithm $\mathcal{A}$ and a finite number $d$, such that for any $h\in \mathcal{H}$ and choice of $x\in X$ by Nature, $\mathcal{A}$ makes at most $d$ errors.

We claim that applying the same algorithm $\mathcal{A}$ to the online setting, where Nature is even allowed to adaptively choose consistent $h^*_t$, will also make at most $d$ errors.

We prove by contradiction. Suppose for some $x$ and some time step $T$, the algorithm makes the $(d+1)th$ error at time $T$, while the Nature chooses $h_T^*$ at this time step. Notice that $h_T^*$ is consistent with the whole history of data. The algorithm will also make $d+1$ errors at time $T$, when Nature fixes $h_T^*$ in advance and uses the same $x$, since it incurs exactly the same history of data. However, this contradicts our assumption on uniform online learnability.
\end{proof}

\subsection{Proof of Theorem \ref{thm main2}}

\begin{proof}
In agnostic online learning, \cite{ben2009agnostic} attains an $O(\sqrt{T \textbf{Ldim}})$ regret bound for Littlestone classes, which however requires $T$ to be known in advance. To get rid of such dependence, we use the FPL algorithm \cite{hutter2005adaptive} for the learning with experts problem with countably many experts and unknown $T$. 
In particular, we use Algorithm \ref{expert} (a subroutine for learning a Littlestone class with known $\textbf{Ldim}$ and $T$) as the experts for the Hierarchical FPL Algorithm from \cite{hutter2005adaptive}, to obtain a regret bound for any Littlestone class in the agnostic setting, without knowing $T$. Having such algorithms for each of $\mathcal{H}_n$ in hand, we then build another hierarchy by using these algorithm as experts under a meta FPL Algorithm \ref{fpl}, to finally obtain a non-uniform regret bound for every hypothesis in $\mathcal{H}$. We begin with lemmas on the guarantees of these algorithms.

\begin{lemma}[Error Bound of Expert$(i_1,\cdots,i_L)$ (Algorithm \ref{expert}), \cite{ben2009agnostic}]\label{ben}
Let $x$ be any sequence chosen by Nature, for any $T\in \mathbb{N}^+$, we denote $(x_1,y_1),\cdots,(x_T,y_T)$ as the first $T$ rounds of $x,y$. Let $\mathcal{H}$ be a hypothesis class with $\textbf{Ldim}(\mathcal{H})<\infty$. Then there exists some $L\le \textbf{Ldim}(\mathcal{H})$ and some sequence $1\le i_1<\cdots<i_L\le T$, such that Expert$(i_1,\cdots,i_L)$ makes at most
    $$
    L+ \sum_{t=1}^T 1_{[h(x_t)\ne y_t]}
    $$
    errors on $(x_1,y_1),\cdots,(x_T,y_T)$ for any $h\in \mathcal{H}$.
\end{lemma}

\begin{lemma}[Regret Bound of FPL (Algorithm \ref{fpl}), \cite{hutter2005adaptive}]\label{hutter}
Let $\eta_t=\frac{1}{\sqrt{t}}$ and $k_i$ satisfying $\sum_{i\in \mathbb{N}^+} e^{-k_i}\le 1$, for any $x,y$ and $T\in \mathbb{N}^+$, the regret w.r.t any expert $E_i$ is bounded by
    $$
    \mathbb{E}\left[\sum_{t=1}^T 1_{[\hat{y}_t\ne y_t]}\right] -\sum_{t=1}^{T} 1_{[E_i(x_t)\ne y_t]} \le (k_i+2)\sqrt{T}.
    $$
\end{lemma}

\begin{lemma}[Regret Bound of Hierarchical FPL (Theorem 9 in \cite{hutter2005adaptive}) ]\label{hutter2}
Let $k_i$ satisfying $\sum_{i\in \mathbb{N}^+} e^{-k_i}\le 1$, for any $x,y$ and $T\in \mathbb{N}^+$, there is an algorithm (Hierarchical FPL) such that the regret w.r.t any expert $E_i$ is bounded by
    $$
    \mathbb{E}\left[\sum_{t=1}^T 1_{[\hat{y}_t\ne y_t]}\right] -\sum_{t=1}^{T} 1_{[E_i(x_t)\ne y_t]} \le 2\sqrt{2k_i T}(1+O(\frac{\log k_i}{\sqrt{k_i}}))=\tilde{O}(\sqrt{k_i T}).
    $$
\end{lemma}

Now we explain the details of how we aggregate the algorithms. Fix any $\mathcal{H}_n$ with $\textbf{Ldim}({\mathcal{H}_n})=d_n$, we first extend the result of \cite{ben2009agnostic} to infinite horizon. 

By Lemma \ref{ben}, for any $T\in \mathbb{N}^+$, there is an Expert$(i_1,\cdots,i_L)$ with $i_L\le T$ attaining the claimed error bound. Counting the number of experts with $i_L\le T$, we find there are at most $T^{d_n}$ such experts. In addition, the number of experts with $i_L= T$ is also bounded by $T^{d_n}$.

We specify a way of assigning complexities $k_i$ in Hierarchical FPL. For each expert with $i_L=j$, we assign a complexity $1+(d_n+2)\log j$. To see why it's a valid choice, notice
$$
\sum_{i\in \mathbb{N}^+} e^{-k_i}\le \sum_{t\in \mathbb{N}^+} t^{d_n} e^{-1-(d_n+2)\log t}\le \sum_{t\in \mathbb{N}^+} \frac{1}{2t^2}\le 0.83.
$$
By Lemma \ref{hutter2}, it immediately gives an expected regret bound of 
$$
\tilde{O}(\sqrt{d_n T}),
$$
on the Hierarchical FPL instance against Algorithm \ref{expert} instances.
Now, choose $k_n=2(\log n+1)$ for the meta FPL algorithm, we can verify
$$
\sum_{n\in \mathbb{N}^+} e^{-k_n}\le \frac{1}{e^2}(1+\sum_{n\in \mathbb{N}^+} \frac{1}{n^2})\le \frac{1}{e}.
$$
As a result, by Lemma \ref{hutter}, for any $h\in \mathcal{H}_n$, the regret compared with it up to time $T$ is bounded by
$$
\tilde{O}(\sqrt{d_n T})+(2\log n+4)\sqrt{T}=\tilde{O}\left((\sqrt{d_n}+\log n) \sqrt{T}\right).
$$

\end{proof}

\subsection{Proof of Lemma \ref{ben}}
The proof is a simplified rephrase of Lemma 12 in \cite{ben2009agnostic}.
\begin{proof}
Fix any $h\in \mathcal{H}$, we denote $j_1,\cdots,j_k$ be the time steps that $h$ doesn't err. It's equivalent to prove that there exists an Expert$(i_1,\cdots,i_L)$ such that it makes at most $\textbf{Ldim}(\mathcal{H})$ errors on $j_1,\cdots,j_k$.

Notice that the sequence $(x_{j_1},y_{j_1}),\cdots,(x_{j_k},y_{j_k})$ is realizable on $\mathcal{H}$, running SOA on this sequence will incur at most $\textbf{Ldim}(\mathcal{H})$ errors. Choose any sub-sequence $t_1,\cdots,t_L$ of $j_1,\cdots,j_k$ with $L\le \textbf{Ldim}(\mathcal{H})$, which contains all time steps that SOA makes errors on. The Expert$(t_1,\cdots,t_L)$ makes the same predictions as SOA on $j_1,\cdots,j_k$, thereby it makes at most $\textbf{Ldim}(\mathcal{H})$ errors on $j_1,\cdots,j_k$, concluding the proof.
\end{proof}

\subsection{Proof of Proposition \ref{ag con}}
Suppose $\mathcal{H}$ is agnostic non-uniform online learnable at rate $ T^{1-\epsilon}$ for some $\epsilon>0$. Define 
$$
h^*=\text{argmin}_{h\in \mathcal{H}}\sum_{t=1}^T 1_{[h(x_t)\ne y_t]}
$$ 
as the best hypothesis in hindsight, the definition of agnostic non-uniform online learnability directly implies that for any $x,y,T$, the expected regert is bounded by
$$
\mathbb{E}\left[\sum_{t=1}^T 1_{[\hat{y}_t\ne y_t]}-
\sum_{t=1}^T 1_{[h^*(x_t)\ne y_t]}\right]\le m(h^*) T^{1-\epsilon}.
$$
We have the partition $\mathcal{H}=\cup_{n\in \mathbb{N}^+} \mathcal{H}_n$, where $\mathcal{H}_n=\{h\in \mathcal{H}: m(h)\le n\}$. Since $\mathcal{H}$ can't be written as a countable union of Littlestone classes, one of $\mathcal{H}_n$ is not a Littlestone class. Suppose it's $\mathcal{H}_m$, then Learner's expected regret on $\mathcal{H}_m$ is (uniformly) bounded by $m T^{1-\epsilon}$.

However, since it's not a Littlestone class, it has infinite Littlestone dimension. In particular, it has arbitrary depth shatter trees. Fix any shatter tree with depth $T_0=\lfloor(2m)^{\frac{1}{\epsilon}}\rfloor+1$, the first integer such that $m T_0^{1-\epsilon}< \frac{T_0}{2}$. Consider the $2^{T_0}$ sequences of $x,y$ defined by this tree. If Nature randomly chooses a sequence among them and presents it to Learner, then Learner has expected loss $\frac{T_0}{2}$ since every prediction is merely a random coin flip. On the other hand, since the tree is shattered, the best hypothesis in hindsight must have loss 0, thus Learner has expected regret at least $\frac{T_0}{2}$. We conclude that it's impossible for Learner to have expected regret at most $m T_0^{1-\epsilon}$ on every sequence since $m T_0^{1-\epsilon}< \frac{T_0}{2}$, leading to the desired contradiction.

\subsection{Proof of Proposition \ref{prop}}
Without loss of generality we can consider the simple problem where $\mathcal{X}=\{0\}$ and $\mathcal{H}=\{h_0(0)= 0, h_1(0)= 1\}$. Nature always selects $x_t=0$ and generates $y_t$ by iid coin flips, i.e. $y_t=0$ with probability $\frac{1}{2}$. Clearly, the expected number of errors made by Learner at any time $T$ is $\frac{T}{2}$.

On the other hand, define $z_t=2y_t-1$ and $S=\sum_{t=1}^T z_t$. We have that $\forall i\ne j$
$$
\mathbb{E}[z_i^2]=\mathbb{E}[z_i^4]=\mathbb{E}[z_i^2 z_j^2]=1,\ \mathbb{E}[z_i z_j]=\mathbb{E}[z_i^3 z_j]=0.
$$
As a result, $\mathbb{E}[S^2]=T$ and $\mathbb{E}[S^4]=T+3T(T-1)\le 3T^2$. By the Paley–Zygmund inequality, we have that 
$$
\mathbb{P}\left( S^2\ge \frac{T}{4} \right)\ge \frac{3}{16}.
$$
Notice the distribution of $S$ is symmetric, we get
$$
\mathbb{P}\left( S \ge \frac{\sqrt{T}}{2} \right)\ge \frac{3}{32}.
$$
As a result, with probability at least $\frac{3}{32}$, there are $\sqrt{T}$ more 1 than 0 in $\{y_1,\cdots,y_T\}$, and the number of errors made by $h_1$ is at most $\frac{T}{2}-\frac{\sqrt{T}}{2}$. Notice it implies that $\min_{h\in \mathcal{H}} \sum_{t=1}^T 1_{[h(x_t)\ne y_t]}\le \frac{T}{2}-\frac{\sqrt{T}}{2}$, we conclude that
$$
\mathbb{E}\left[\sum_{t=1}^T 1_{[\hat{y}_t\ne y_t]}-
\min_{h\in \mathcal{H}} \sum_{t=1}^T 1_{[h(x_t)\ne y_t]}\right]\ge \frac{3\sqrt{T}}{64}.
$$
As a result, $\mathcal{H}$ can't be learnt with rate $o(\sqrt{T})$.

\subsection{Proof of Proposition \ref{peas}}
\begin{proof}
We only need to discuss the if direction, since non-uniform stochastic online learnability implies EAS online learnability.

Suppose $\mathcal{H}$ has a countable effective cover $\mathcal{H}'$. We claim that with probability 1 on the drawing of $x$, there exists $h'\in \mathcal{H}'$ such that
\begin{equation}\label{eq1}
    h^*(x_t)=h'(x_t), \forall t\in \mathbb{N}^+.
\end{equation}
To see this, we fix the $h'$ w.r.t $h^*$ as in the definition of effective cover, then $\mathbb{P}_{x\sim \mu}(h^*(x)\ne h'(x))=0$. Since any countable union of zero-measure set still has zero measure, we conclude the proof of the claim.

Since $\mathcal{H}'$ is countable, we list its elements as $h'_1,\cdots,h'_n,\cdots$. We partition $\mathcal{H}$ into $\cup_{n\in \mathbb{N}^+} \mathcal{H}_n$, such that $\mathcal{H}_n$ is the set of $h$ covered by $h'_n$.

Conditioned on the event \ref{eq1}, consider the simple algorithm that outputs the prediction of $h'_n$ which has the smallest index $n$ among all consistent hypotheses in $\mathcal{H}'$. When $h^*\in \mathcal{H}_m$, $h'_m$ is consistent with all $x_t$, then clearly this algorithm makes at most $m$ errors, which is a hypothesis-wise bound as desired. In addition, this happens with probability 1.

\end{proof}


\end{document}